\newcommand{\norm}[1]{\left\lVert#1\right\rVert}
\newcommand{\st}{\mathfrak{s}}
\newcommand{\ns}{\mathfrak{n}}
\newcommand{\spd}[1]{\mathcal{S}^+_{#1}}
\newcommand{\Gr}{\mathcal{G}}
\DeclareMathOperator{\tr}{tr} 
\DeclareMathOperator{\spn}{span}
\newtheorem{theorem}{Theorem}[section]
\newtheorem{definition}[theorem]{Definition}
\newtheorem{proposition}[theorem]{Proposition}
\newtheorem{corollary}[theorem]{Corollary}
\newenvironment{proof}{\emph{Proof.}}{\hfill \rule{2mm}{2mm}}
\title{Geometry-aware Stationary Subspace Analysis}
\author[1]{Inbal Horev}
\author[2]{Florian Yger}
\author[1]{Masashi Sugiyama}
\affil[1]{Graduate School of Frontier Sciences, University of Tokyo}
\affil[2]{LAMSADE Universit{\'e} Paris-Dauphine}
\date{}
\begin{document}
% \nipsfinalcopy is no longer used

\maketitle

\begin{abstract}
In many real-world applications observed data exhibits \emph{non-stationarity}, i.e., its distribution changes over time.
One approach to handling non-stationarity is to remove or minimize it before attempting to analyze the data.
In the context of brain computer interface (BCI) data analysis this may be done by means of \emph{stationary subspace analysis} (SSA).
The SSA method finds a matrix that projects the data onto a stationary subspace by optimizing a cost function based on a matrix divergence.
In this work we present an alternative method for SSA based on a symmetrized version of this matrix divergence.
We show that doing so frames the problem in terms of distances between symmetric positive definite (SPD) matrices, suggesting a geometric interpretation of the problem.
Stemming from this geometric viewpoint, we introduce and analyze a method which utilizes the geometry of the SPD matrix manifold and the invariance properties of its metrics.
We demonstrate the usefulness of our method in experiments on both synthesized and real-world data.
\end{abstract}

\section{Introduction}

A common assumption in statistical modeling is that the distribution of observed data does not change over time, i.e., that it is \emph{stationary}.
In most cases it is this assumption of stationarity which allows results to be effectively generalized from the sample to the population.
When the stationarity assumption is violated, as is often the case in real-world applications such as speech enhancement \citep{cohen2001speech} %, rangachari2006noise} 
or neurological data analysis \citep{samek2012brain}, %, wojcikiewicz2011stationary
specialized machine learning methods must be developed in order to maintain adequate prediction capabilities. % \citep{quionero2009dataset}

A relatively well-studied non-stationary setting is covariate-shift \citep{shimodaira2000improving}, in which the input distribution changes but the conditional distribution of the outputs does not.
The problem of covariate-shift has received growing attention in recent years, and many theoretical and practical aspects have been addressed (see \cite{sugiyama2012machine} for an in-depth exploration of this topic).

These works typically do not aim to remove or reduce non-stationarity in the data, but rather they try to cope with its existence.
A different approach is to remove, or minimize, any existing non-stationarities before attempting to analyze the collected data.
In the context of brain computer interface (BCI) data analysis, %\cite{Dornhege2007}, 
two such note-worthy methods are stationary subspace analysis (SSA) \cite{von2009finding} and stationary common spacial patterns (sCSP) \cite{wojcikiewicz2011stationary}.

Similar in spirit to independent component analysis (ICA) \cite{hyvarinen2004independent}, SSA statistically models the data as a mixture of stationary and non-stationary signals.
Unlike ICA, however, the signals are not assumed to be independent of each other.
The data is first split into (possibly overlapping) time frames called epochs.
Then a projection matrix is found by optimizing a cost function based on the divergence between distributions in various epochs.

The other method, by now quite a standard step for classification tasks in BCI systems, is the (supervised) sCSP method.
Its goal is to project the data onto a subspace in which the various data classes are more separable.
The sCSP method directs this subspace towards a stationary subspace by means of regularization.

Although SSA is essentially an unsupervised method, variations of it exist which are useful for supervised tasks such as classification \cite{samek2012brain}.
These methods attempt to remove non-stationarity while keeping the discriminative inter-class variations intact.
In this work we present an alternative method for stationary subspace extraction, focusing for the moment on the unsupervised setting.

Unlike SSA, the inputs to our method are not the raw signals themselves, but rather covariance matrices, computed from the signals (or from features based on the signals) using one of the many existing covariance estimators (e.g.,
\cite{ledoit2004well}). %rousseeuw1999fast, balzi2015importance
Covariance matrices have gained increasing attention in recent years, and are now commonly used in many machine learning and signal processing applications such as computer vision applications \cite{tuzel2006region}, %tuzel2008pedestrian
brain imaging \cite{pennec2006riemannian} %, arsigny2006log} 
and BCI data analysis \cite{barachant2013classification}. %barachant2010riemannian,
Their rich mathematical structure has been extensively studied \cite{bhatia2009positive}, %schwartzman2006random}, 
and advances in optimization methods on matrix manifolds in recent years have motivated the development of geometric methods for various machine learning tasks such as dictionary learning \cite{cherian2014riemannian}, % harandiriemannian2015}, 
metric learning \cite{kusner2014stochastic} %,yger2015supervised} 
and dimensionality reduction \cite{fletcher2004principal}.
This in turn motivates a covariance-based approach which utilizes the geometric properties of the symmetric positive definite (SPD) matrix manifold for SSA.

\section{Geometry-aware stationary subspace analysis}

As discussed in the introduction, the task of extracting the stationary part from an observed mixture of stationary and non-stationary signals is essential in various applications.
In this section we present our approach to this problem.
%, which is an extension of the stationary subspace analysis (SSA) method proposed in \citet{von2009finding} to covariance matrices.
We name it \emph{geometry-aware SSA} (gaSSA) since we utilize the geometric properties of covariance matrices.
To find a stationary subspace, SSA uses a cost function which is based on a matrix divergence.
As a first step, we suggest a formulation that uses a symmetrized version of the same matrix divergence.
We then show that this symmetrized matrix divergence is %can be seen as 
a distance between SPD matrices and offer a geometric interpretation to the problem of SSA.

We begin with a formal statement of the problem.
To this end we provide a review of the original SSA model and framework \cite{von2009finding}.
Next, we present the symmetrized matrix divergence and discuss its relation to the Riemannian geometry of SPD matrices.
Finally, we end this section by introducing and analyzing a general formulation for our geometry-aware SSA.

\subsection{Stationary subspace analysis}
\label{section:SSA}

Let $x(t) \in \mathbb{R}^D$ be a vector of $D$ input signals, composed of $m$ stationary sources $s^\st(t) = \left[ s_1(t), \hdots, s_m(t)\right]^\top$ ($\st$-sources) and $D-m$ non-stationary sources $s^\ns(t) = \left[ s_{m+1}(t), \hdots, s_D(t)\right]^\top$ ($\ns$-sources), mixed by a linear mixing transformation,
\begin{equation}
x(t) = As(t) = \left[ \begin{matrix} A^\st  A^\ns \end{matrix} \right] \left[ \begin{matrix} s^\st (t) \\ s^\ns (t) \end{matrix} \right],
\end{equation}
where $A \in GL_D(\mathbb{R})$, the general linear group of size $D$ over $\mathbb{R}$, i.e., the set of all $D \times D$ invertible matrices with entries in $\mathbb{R}$.
The spaces spanned by the column vectors $A^\st$ and $A^\ns$ are referred to as the $\st$-space and $\ns$-space, respectively.

The SSA model makes relatively few assumptions on the $\st$- and $\ns$-sources.
First, the $\st$-sources are stationary only in the weak (or wide) sense \cite{kantz2004nonlinear}.
That is, their first and second moments are required to be constant in time.
For the $\ns$-sources, their first two moments may vary between epochs of length $T$ denoted by $\tau_i = \left[ t_0(i), \hdots, t_0(i) + T \right]$, where $t_0(i)$ is the start time of the $i$-th epoch.
The sources do not necessarily follow a Gaussian distribution, but non-stationarities are assumed to be visible in the first two moments.
Furthermore, this model does not assume that the sources are independent, namely, their covariance matrix is given by
\begin{equation}
\Sigma (\tau_i) = \mathbb{E}\left[ s(\tau_i) s(\tau_i)^\top \right] =  \left( \begin{matrix}
\Sigma^{\st} & \Sigma^{\st \ns}(\tau_i) \\
{\Sigma^{\st \ns}(\tau_i)}^\top & \Sigma^{\ns}(\tau_i)
\end{matrix} \right),
\end{equation}
where $\Sigma^{\st} \in \mathbb{R}^{m \times m}$, $\Sigma^{\ns} \in \mathbb{R}^{(D-m) \times (D-m)}$ and $\Sigma^{\st \ns} \in \mathbb{R}^{m \times (D-m)}$.
Note that since $\Sigma^{\st}$ is time independent we have dropped the notation $(\tau_i)$.

The goal of SSA is to find a de-mixing transformation $\widehat{A}^{-1}$ that separates the $\st$-sources from the $\ns$-sources.
This matrix $\widehat{A}^{-1}$ is not unique, but rather undetermined up to scaling, sign and linear transformations within the $\st$- and $\ns$-spaces.
So, without loss of generality, the data may be centered and whitened such that the $\st$-sources have a zero mean and a diagonal covariance matrix with unit variance.\footnote{This is also common practice in ICA \cite{hyvarinen2004independent}}
Put differently, the de-mixing matrix is written as $\widehat{A}^{-1} = \widehat{B} Z$ where $Z = \text{Cov}(x)^{-1/2}$ 
is a whitening matrix created by a covariance estimator $\text{Cov}(\cdot )$ (in this case it is the empirical estimator) and $\widehat{B} \in \mathcal{O}_D = \left\lbrace V \in \mathbb{R}^{D \times D} \; : \; V^\top V = \mathbb{I} \right\rbrace$, the set of all $D \times D$ orthogonal matrices.

To find the matrix $\widehat{B}$, the signals are split into $N$ epochs $\tau_1,\hdots, \tau_N$ of length $T$.
Each epoch is characterized by its empirical mean $\widehat{\mu}_i$ and covariance $\widehat{\Sigma}_i$.
Then for each epoch, the mean and covariance of the $\st$-sources may be written as
\begin{equation}
\label{eq:epochStats}
\widehat{\mu}^\st_i = \mathbb{I}_D^m \widehat{B}Z \widehat{\mu}_i, \qquad \widehat{\Sigma}^\st_i = \mathbb{I}_D^m \widehat{B}Z \widehat{\Sigma}_i \left( \mathbb{I}_D^m \widehat{B}Z  \right)^\top,
\end{equation}
where $\mathbb{I}_D^m$ is the $D \times D$ identity matrix, truncated to the first $m$ columns.
Since the true $\mu^\st$ and $\Sigma^\st$ are by definition stationary, the matrix $\widehat{B}$ is that for which $\widehat{\mu}^\st_i$ and $\widehat{\Sigma}^\st_i$ vary the least across all epochs.
Owing to the maximum entropy principle, %\cite{maxentropy} 
SSA uses the Kullback-Leibler (KL) divergence between Gaussian distributions to compare the epoch distributions up to their second moment.
The matrix $\widehat{B}$ is thus found by minimizing the following cost function:
\begin{equation}
\label{eq:SSAcost}
\mathcal{L}\left({\widehat{B}}\right) = \sum_{i=1}^N {D_\mathrm{KL}\left[ \mathcal{N}\left( \widehat{\mu}^\st_i, \widehat{\Sigma}^\st_i \right) \; \vert \vert \; \mathcal{N} (0,\mathbb{I}) \right]} = - \sum_{i=1}^N \left( {\log \det \widehat{\Sigma}^\st_i + {\widehat{\mu}^\st_i}{}^\top \widehat{\mu}^\st_i} \right),
\end{equation}
where $D_\mathrm{KL}$ is the KL divergence and $\mathcal{N}\left( \mu, \Sigma \right)$ denotes a multivariate Gaussian distribution with mean $\mu$ and covariance $\Sigma$.

%The optimization problem above is non-convex and its performance depends greatly on initialization.
%Thus, a variant of SSA which has an analytic solution (named analytic SSA, or ASSA) is proposed in \citet{hara2010stationary}.
%Under the additional assumption that the $\st$- and $\ns$-sources are group-wise uncorrelated, i.e. $\Sigma^{\st \ns} = 0^{m \times (D-m)}$, the problem is posed as a generalized eigenvalue problem.
%In another work, maximum-likelihood SSA \cite{kawanabe2011information}, the assumption of group-wise uncorrelation is used to develop an elegant information-geometric interpretation of the problem.

\subsection{Symmetrized matrix divergence}

SSA and its variants use in their cost function the KL divergence between Gaussian distributions.
In what follows we assume that these distributions have a zero mean.%\footnote{This is a valid assumption since the data is centered and whitened.}
Under this assumption the KL divergence is a Bregman matrix divergence \cite{banerjee2005clustering} between covariance matrices.
The family of Bregman matrix divergences is generally defined as
\begin{equation}
D_{\Phi}\left( X,Y\right) = \Phi (X) - \Phi (Y) - \tr \left( \left( \nabla \Phi \left( Y \right) \right)^\top \left( X-Y\right) \right).
\end{equation}
$D_\mathrm{KL}$ is obtained for  $\Phi (X) = - \log \det X$, so it is often called the log-determinant divergence \cite{kulis2009low}.

Bregman matrix divergences are useful in machine learning and have a number of useful properties \cite{kulis2009low}, %banerjee2005clustering
such as linearity and convexity in the first argument (and, in the case of the KL divergence, also in the second).
However, as can be seen from their definition, they are asymmetric and do not satisfy the triangle equality.
In particular, in our case we have that $D_\mathrm{KL} \left( X \; \vert \vert \; Y \right) \neq D_\mathrm{KL} \left( Y \; \vert \vert \; X \right)$ for two arbitrary matrices $X \neq Y$.
Subsequently, symmetrized versions of the Bregman matrix divergence, in particular Jensen-Bregman divergences, have been studied in recent years \cite{nielsen2009sided}. %banerjee2009symmetrized,
In the case of the KL divergence (for zero mean distributions), this gives
\begin{eqnarray}
\nonumber D_\mathrm{JBLD} \left( X, Y \right) & = &\frac{1}{2} \left[ D_\mathrm{KL} \left( X \; \vert \vert \; \frac{1}{2}\left(X+Y\right) \right) + D_\mathrm{KL} \left( Y \; \vert \vert \; \frac{1}{2}\left(X+Y\right) \right) \right] \\
 & = & \log \det \left( \frac{1}{2}\left(X+Y\right)\right) - \frac{1}{2} \log \det \left( XY \right)
\end{eqnarray}
and is called the Jensen-Bregman log-determinant (JBLD) divergence \cite{cherian2011efficient}.

The JBLD has many favorable properties (see \cite{sra2011positive}), primarily that its square root comprises a metric on the SPD matrix manifold.
Moreover, in \citet{sra2011positive} it has been shown that it is a close approximation to the \emph{affine invariant Riemannian metric} (AIRM) \cite{bhatia2009positive} and shares many of its mathematical properties.
The practical properties of both metrics, in particular their invariance properties, will be discussed in Section \ref{section:invariance}.
In the context of SPD matrices, the JBLD is referred to as the symmetric Stein divergence or the (square of the) log-determinant metric \cite{sra2011positive}.
In the sequel we adopt the notation $\delta_\mathrm{s} \left( X , Y \right)$.

Motivated by the above, we formulate SSA using a new cost function based on $\delta_\mathrm{s}$ (cf. Eqs. (\ref{eq:epochStats}) and (\ref{eq:SSAcost})):
\begin{align}
\!\!\!
\mathcal{L}({\widehat{B}}) = \sum_{i=1}^N  {\delta_\mathrm{s}^2( \widehat{\Sigma}^\st_i, \mathbb{I} )} = \sum_{i=1}^N \delta_\mathrm{s}^2 ( Q^\top \tilde{\Sigma_i} Q , \mathbb{I})
 = \sum_{i=1}^N \left[ \log \det \left( \frac{1}{2} (\widehat{\Sigma}^\st_i +\mathbb{I} ) \right) - \frac{1}{2} \log \det ( \widehat{\Sigma}^\st_i ) \right],
\label{eq:costDeltaS}
\end{align}
where $Q = \left( \mathbb{I}_D^m \widehat{B} \right)^\top$ and $\tilde{\Sigma}_i = Z \Sigma_i Z^\top$ are the matrices whitened with $Z = {\bar{\Sigma}}^{-1/2}$, $\bar{\Sigma} = \underset{\Sigma \in \spd{D}}{\mathrm{argmin}} \; \delta_\mathrm{s}^2 \left( \Sigma_i, \Sigma \right)$, the mean of $\Sigma_i$ w.r.t.~$\delta_\mathrm{s}$.
$\spd{D}$ denotes the set of all $D \times D$ SPD matrices.

\subsection{A geometric interpretation}

By replacing the cost function with one based on the symmetrized divergence we gain not only the beneficial properties of the symmetric divergence, but also new insight into the problem of SSA.
First, note that in Eq. (\ref{eq:costDeltaS}) the problem is ultimately framed in terms of distances between SPD matrices.
This suggests adopting a geometric perspective, whereby the notion of stationarity is captured by the dispersion of the matrices $\Sigma_i$.
In this view, the assumption that the covariance matrices of stationary signals do not vary much between epochs translates to them having small distances between them.

\begin{figure}[t]
\begin{minipage}[b]{0.36\textwidth}
\centering
  \includegraphics[width=1\textwidth]{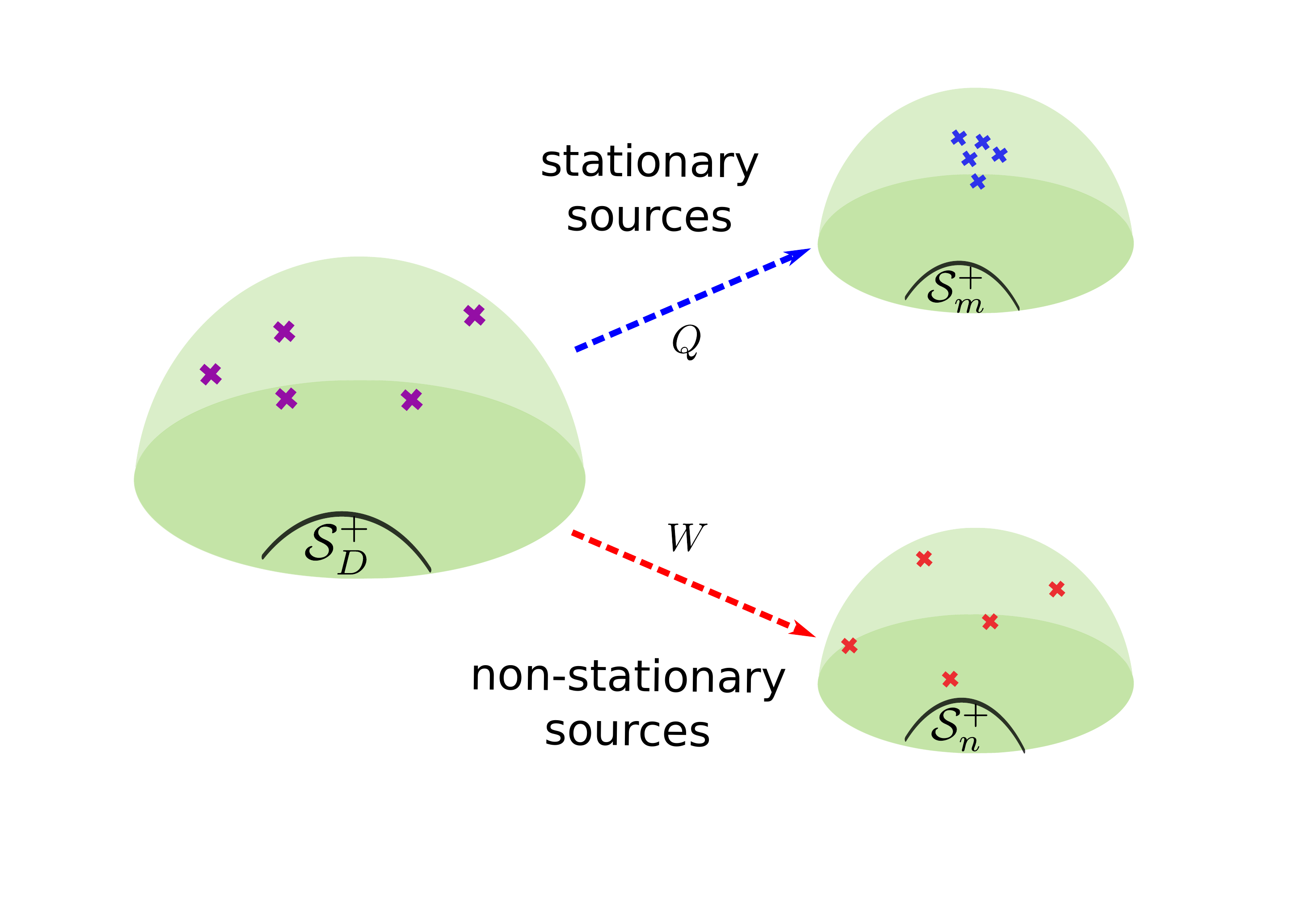}
\end{minipage}
~~~~~
\begin{minipage}[b]{0.62\textwidth}
\caption{An illustration of our covariance-based approach.
A set of input covariance matrices $\left\lbrace \Sigma_i \in \spd{D} \right\rbrace $ are made up of a mixture (purple) of stationary (blue) and non-stationary (red) parts.
Due to non-stationarities, in the original space the matrices are spread out.
The matrices are mapped to two lower dimensional spaces, namely the $\st-$ (stationary) and $\ns-$ (non-stationary) spaces, via the matrices $Q$ and $W$, respectively.
In the $\st$-space the matrices are now more localized compared to the $\ns$-space in which they have a high variance.
The spaces spanned by the columns of the matrices are orthogonal to each other, that is $\mathcal{Q} \perp \mathcal{W}$.}
\label{fig:illustrationApproach}
\end{minipage}
\end{figure}

An illustration of this idea is presented in Fig. \ref{fig:illustrationApproach}.
In this figure, the matrices $\Sigma_i$ are seen as points on the SPD matrix manifold $\spd{D}$.
The goal of our method is to find transformations $Q$ and $W$ that map the matrices onto two separate manifolds of lower dimension - the stationary and non-stationary space, respectively.
The matrices in the stationary space will exhibit small variation, while the non-stationarities will be captured in the non-stationary space where the variation of the matrices will be greater.
The transformations $Q$ and $W$ may be chosen to be orthogonal to each other, producing well separated $\mathfrak{s}$- and $\mathfrak{n}$-spaces.
That is, $W \in \mathcal{Q}^\perp$ for $\mathcal{Q} = \spn Q$ (likewise $Q \in \mathcal{W}^\perp$), where $\perp$ denotes the orthogonal complement.

Put formally, our objective is to find a rank-$m$ transformation matrix $Q \in \mathbb{R}^{D \times m}$ which maps $\Sigma_i \in \spd{D}$ to $\widehat{\Sigma}^\st_i \in \spd{m}$ for $m < D$ such that the log-determinant distance between the compressed centered matrices $\widehat{\Sigma}^\st_i = Q^\top \tilde{\Sigma_i} Q$ and their mean, which for the centered matrices is $\mathbb{I}$, is minimized.
Note that the space spanned by the columns of $Q$ is of importance, and not the specific columns themselves.
So, we may optimize $Q$ over the Grassmann manifold \cite{edelman1998geometry}, $\Gr = \left\lbrace \text{span}(Q) \; : \; Q \in \mathbb{R}^{D \times m}, \; Q^\top Q = \mathbb{I} \right\rbrace$, the set of all $m$-dimensional linear subspaces of $\mathbb{R}^{D \times D}$.

In practice, for optimization we employ a Riemannian trust-regions method described in \citet{absil2009optimization} and implemented efficiently in \citet{manopt}.
%A general recipe for the implementation of Riemannian gradient descent for a function $f$ can be found in the supplementary material.

\subsection{A generic geometric formulation}

Given the strong relation between the log-determinant metric and the AIRM \cite{bhatia2009positive}, a natural progression is to incorporate the AIRM into the cost function.
To understand why it would be beneficial to use the AIRM it is necessary first to briefly discuss the geometry of the SPD matrix manifold.

\begin{figure}[t]
\begin{minipage}[b]{0.35\textwidth}
\centering
  \includegraphics[width=1\textwidth]{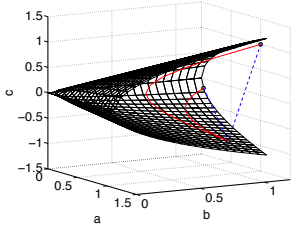}
\end{minipage}
\begin{minipage}[b]{0.60\textwidth}
      \vspace*{10mm}
\caption{Comparison between Euclidean (blue straight dashed lines) and Riemannian (red curved solid lines) distances measured between points of the space $\spd{2}$.}
\label{fig:psd2d}
\vspace*{10mm}
\end{minipage}
\end{figure}

When equipped with the Frobenius inner product $\langle A, B \rangle_{\mathcal{F}} = \tr(A^\top B)$, the set $\spd{n}$ of SPD matrices of size $n \times n$, belongs to a Euclidean space.
In this case, similarity between SPD matrices can be measured simply by using the Euclidean distance derived from the Euclidean norm.
This is readily seen in the following example for $2\times2$ SPD matrices.
A matrix $A \in \spd{2}$ can be written as ${A = \left[ \begin{smallmatrix}
a & c \\
c & b  \end{smallmatrix} \right]}$ with ${ab - c^2 > 0 }$, $a>0$ and $b>0$.
Then matrices in $\spd{2}$ can be represented as points in $\mathbb{R}^3$ and
the constraints can be plotted as a convex cone whose interior is populated by the SPD matrices (see Fig.~\ref{fig:psd2d}). 
In this representation, the Euclidean geometry of symmetric matrices then implies that distances are computed along straight lines.

Despite its simplicity, the Euclidean geometry has several drawbacks and is not always well suited for SPD matrices~\citep{fletcher2004principal, arsigny2007geometric}. %, sommer2010manifold
For example, due to an artifact referred to as the \emph{swelling effect} \cite{arsigny2007geometric}, for a task as simple as averaging two matrices, it may occur that the determinant of the average is larger than any of the two matrices.
Another drawback, illustrated in Fig.~\ref{fig:psd2d} and documented by~\citet{fletcher2004principal}, is the fact that this geometry forms a non-complete space.
Hence, in this Euclidean space interpolation between SPD matrices is possible, but extrapolation may produce indefinite matrices, leading to uninterpretable solutions. 

An efficient alternative which addresses these issues is to consider the space of SPD matrices as a curved space, namely a Riemannian manifold.
Of the possible Riemannian distances, the AIRM, due to its favorable mathematical properties, is widely used in many applications (see, for example \cite{fletcher2004principal, pennec2006riemannian}). %, cherian2014riemannian
It is defined for any $X, Y \in \spd{D}$ as:
\begin{equation}
\delta_\mathrm{r}^2\left( X,Y \right) = \norm{ \log \left( X^{-1/2}YX^{-1/2} \right) }^2_\mathbb{F} ,
\label{eq:AIRM}
\end{equation}
where $\log(\cdot )$ is the matrix logarithm and $\norm{X}^2_\mathbb{F} = \tr \left( X^\top X \right)$ is the Frobenius norm.

In the curved space, the geodesics between matrices obtained by the AIRM are computed on curved lines as illustrated in Fig.~\ref{fig:psd2d} for the space $\spd{2}$.
Symmetric matrices with null and infinite eigenvalues (i.e., those which lie on the boundary of the convex cone, but not in it) are both at an infinite distance from any SPD matrix on the manifold (\emph{within} the cone).

So, let us also consider a cost function of the same form defined w.r.t.~the AIRM.\footnote{Other metrics such as the Euclidean metric or the log-Euclidean metric \cite{arsigny2007geometric} may also be used.}
A general expression for our geometry-aware SSA (gaSSA) is then:
\begin{definition}[gaSSA]
\label{def:gaSSA}
gaSSA w.r.t.~a metric $\delta$ is defined as
\begin{equation}
\label{eq:costFunctionGA}
\widehat{Q} = \underset{Q \in \Gr (D,m)}{\mathrm{argmin}} \; \sum_i \delta^2 \left( Q^\top \tilde{\Sigma_i} Q,  \mathbb{I} \right),
\end{equation}
\end{definition}
where $\tilde{\Sigma}_i = Z \Sigma_i Z^\top$ are the matrices whitened with $Z = \bar{\Sigma}^{-1/2}$ and $\bar{\Sigma} = \underset{\Sigma \in \spd{D}}{\mathrm{argmin}} \; \delta^2 \left( \Sigma_i, \Sigma \right)$ is the matrix mean w.r.t.~$\delta$.
We note that this cost function is similar in spirit to the one in \citet{harandi2014manifold} and can be considered an unsupervised version of it.
In the next section we will show that the need for matrix whitening can be alleviated.

The Euclidean gradient w.r.t.~$Q$ of the cost function, used for the optimization, can be found in~\cite{horev2015intrinsic}.

\subsection{Symmetries and invariance properties}
\label{section:invariance}

We now discuss the symmetries of our optimization problem and the invariance properties of our chosen metrics.
These properties will enable us to significantly simplify our problem.
For brevity we will state the results in terms of the AIRM, but the same holds true for the log-determinant metric.

Our key observation stems from the fact that $\delta_\mathrm{r}$ and $\delta_\mathrm{s}$ are invariant to congruent transformations of the form $X \mapsto P^H X P$  for $P \in GL_D(\mathbb{C})$ and $P^H = \overline{P}^\top$ is the conjugate transpose \cite{bhatia2009positive}.
However, since our discussion is limited to real matrices, we have
\begin{equation}
\label{eq:congruenceInvariance}
\delta^2_\mathrm{r} \left( X,Y \right) =  \delta^2_\mathrm{r} \left( P^\top X P , P^\top Y P \right).
\end{equation}
for $X,Y \in \spd{D}$ and a real-valued invertible matrix $P$.
This is a crucial point since the whitening matrix $Z$ and, more importantly, the \emph{mixing matrix} $A$, act on the covariance matrices in this way.

\begin{proposition}
Let $\mathbf{\Lambda} = \left\lbrace \Lambda_i \right\rbrace_{i=1}^n$ for $\Lambda_i \in \spd{n}$ be a set of SPD matrices of size $n \times n$ and let $\Sigma_i = A \Lambda_i A^\top$ for some real-valued invertible matrix $A$. Denoting the Riemannian mean of $\mathbf{\Lambda}$ by $\bar{\Lambda}$, the Riemannian mean $\bar{\Sigma}$ of the set $\mathbf{\Sigma} = \left\lbrace \Sigma_i \right\rbrace_{i=1}^n$ is given by $\bar{\Sigma} = A \bar{\Lambda} A^\top$.
\end{proposition}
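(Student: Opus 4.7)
The plan is to exploit the congruence invariance of the AIRM (Eq.~\ref{eq:congruenceInvariance}) together with the fact that $\Sigma \mapsto A^{-1}\Sigma A^{-\top}$ is a bijection of $\spd{n}$ onto itself whenever $A \in GL_n(\mathbb{R})$ (its inverse is $\Sigma' \mapsto A\Sigma' A^\top$, and congruence by an invertible real matrix preserves symmetric positive definiteness).

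First I would write out the defining variational problem for the Riemannian mean,
\begin{equation*}
\bar{\Sigma} \;=\; \underset{\Sigma \in \spd{n}}{\mathrm{argmin}} \; \sum_{i=1}^n \delta_\mathrm{r}^2(\Sigma_i, \Sigma),
\end{equation*}
and substitute $\Sigma_i = A \Lambda_i A^\top$. Next I would apply the congruence invariance with the particular choice $P = A^{-\top}$ (so that $P^\top = A^{-1}$), which cancels the $A$ and $A^\top$ flanking $\Lambda_i$ and produces
\begin{equation*}
\delta_\mathrm{r}^2\bigl(A \Lambda_i A^\top,\, \Sigma\bigr) \;=\; \delta_\mathrm{r}^2\bigl(\Lambda_i,\, A^{-1} \Sigma A^{-\top}\bigr).
\end{equation*}
Setting $\Sigma' = A^{-1} \Sigma A^{-\top}$ and using the bijection noted above, minimizing the original cost over $\Sigma \in \spd{n}$ is equivalent to minimizing $\sum_i \delta_\mathrm{r}^2(\Lambda_i, \Sigma')$ over $\Sigma' \in \spd{n}$, whose minimizer is by definition $\bar{\Lambda}$. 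Undoing the change of variable yields $\bar{\Sigma} = A \bar{\Lambda} A^\top$.

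To turn this outline into a clean proof I would also invoke the standard fact that the Karcher mean on $\spd{n}$ equipped with the AIRM is unique (the manifold is a Hadamard manifold of non-positive sectional curvature, so the Fréchet functional is strictly geodesically convex), which ensures that both argmin sets are singletons and the equality holds pointwise rather than merely as equality of minimizer sets.

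The argument really has no hard step; the only subtlety worth flagging is the correct choice of the congruence matrix (one must use $P = A^{-\top}$, not $A$, so that $P^\top A = \mathbb{I}$) and the observation that the associated change of variable is a genuine self-bijection of $\spd{n}$. The same proof goes through verbatim for $\delta_\mathrm{s}$ since the log-determinant metric shares the same congruence invariance, which accounts for the remark that the whitening step in Eq.~\ref{eq:costDeltaS} commutes harmlessly with the mixing by $A$.
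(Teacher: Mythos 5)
Your proof is correct and follows essentially the same route as the paper: both rest on the congruence invariance of $\delta_\mathrm{r}$ and a change of variables that identifies the two Fr\'echet functionals, the paper working from the $\Lambda$ side and you from the $\Sigma$ side. Your version is in fact slightly more careful, since you make explicit the self-bijection of $\spd{n}$ and the uniqueness of the Karcher mean, which the paper leaves implicit.
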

\begin{proof}
The Riemannian mean of the set $\mathbf{\Lambda}$ is defined as $\bar{\Lambda} = \underset{\Lambda \in \spd{n}}{\mathrm{argmin}} \; \sum_i \delta^2_\mathrm{r} \left( \Lambda_i, \Lambda \right)$.
Using the congruence invariance (Eq. (\ref{eq:congruenceInvariance})) we have $\delta^2_\mathrm{r} \left( \Lambda_i, \Lambda \right) = \delta^2_\mathrm{r} \left( \Sigma_i, A \Lambda A^\top \right)$ and the result follows.
\end{proof}

Using the above we obtain several useful equivalence relations.
\begin{corollary}
The following expressions are equivalent:
\begin{equation}
\label{eq:distanceEquivalence}
\delta^2_\mathrm{r} \left( \tilde{\Sigma}_i, \mathbb{I} \right) = \delta^2_\mathrm{r} \left( \Sigma_i, \bar{\Sigma} \right) = \delta^2_\mathrm{r} \left( \Gamma_i, \bar{\Gamma} \right),
\end{equation}
where $\Gamma_i$ is the covariance matrix of the \emph{unmixed} sources in the $i$-th epoch.
\end{corollary}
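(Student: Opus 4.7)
The plan is to apply the congruence invariance of $\delta_\mathrm{r}$ (Eq.~(\ref{eq:congruenceInvariance})) together with the equivariance of the Riemannian mean (the proposition just proved) twice, once for each equality.

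For the first equality, I would recall that by construction $\tilde{\Sigma}_i = Z \Sigma_i Z^\top$ with $Z = \bar{\Sigma}^{-1/2}$. Since $\bar{\Sigma} \in \spd{D}$, its principal square root $\bar{\Sigma}^{1/2}$ is a real-valued invertible symmetric matrix. Applying Eq.~(\ref{eq:congruenceInvariance}) with $P = \bar{\Sigma}^{1/2}$ to the pair $(\tilde{\Sigma}_i, \mathbb{I})$ gives
\begin{equation*}
\delta^2_\mathrm{r}(\tilde{\Sigma}_i, \mathbb{I}) = \delta^2_\mathrm{r}\bigl(\bar{\Sigma}^{1/2} \tilde{\Sigma}_i \bar{\Sigma}^{1/2},\; \bar{\Sigma}^{1/2} \mathbb{I}\, \bar{\Sigma}^{1/2}\bigr) = \delta^2_\mathrm{r}(\Sigma_i, \bar{\Sigma}),
\end{equation*}
which establishes the first equality.

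For the second equality, I would use the fact that $\Sigma_i = A \Gamma_i A^\top$ where $A$ is the (real invertible) mixing matrix. By the preceding proposition, the Riemannian mean transforms equivariantly, so $\bar{\Sigma} = A \bar{\Gamma} A^\top$. Applying congruence invariance again, this time with $P = A^\top$, yields
\begin{equation*}
\delta^2_\mathrm{r}(\Sigma_i, \bar{\Sigma}) = \delta^2_\mathrm{r}\bigl(A \Gamma_i A^\top,\; A \bar{\Gamma} A^\top\bigr) = \delta^2_\mathrm{r}(\Gamma_i, \bar{\Gamma}),
\end{equation*}
completing the chain of equalities.

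There is no real obstacle here: the proof is essentially two invocations of congruence invariance glued together by the mean-equivariance proposition. The only subtlety worth flagging explicitly is that one must justify using $\bar{\Sigma}^{1/2}$ and $A$ (or $A^\top$) as admissible congruence matrices, both of which are real-valued and invertible, so the real form of Eq.~(\ref{eq:congruenceInvariance}) applies directly without having to invoke the complex version.
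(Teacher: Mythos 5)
Your proof is correct and follows exactly the route the paper intends: the first equality by congruence invariance with $P=\bar{\Sigma}^{1/2}$ (undoing the whitening), and the second by combining $\Sigma_i = A\Gamma_i A^\top$ with the mean-equivariance proposition and congruence invariance with the mixing matrix. This matches the paper's (implicit) derivation, so nothing further is needed.
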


We have essentially shown that both the whitening operation and the mixing matrix $A$ do not affect the distance between the covariance matrices of the original unmixed signals.
We can then re-write our optimization problem as
\begin{align}
\label{eq:gaSSA_whitening}
\nonumber \widehat{Q} & =  \underset{Q \in \Gr (D,m)}{\mathrm{argmin}} \; \sum_i \delta^2 \left( Q^\top \tilde{\Sigma_i} Q,  \mathbb{I} \right) =  \underset{Q \in \Gr (D,m)}{\mathrm{argmin}} \; \sum_i \delta^2 \left( Q^\top A \Gamma_i A^\top Q,  Q^\top A \bar{\Gamma} A^\top Q \right)  \\
& =  \underset{Q' \in \Gr (D,m)}{\mathrm{argmin}} \; \sum_i \delta^2 \left( Q'^\top \Gamma_i Q',  Q'^\top \bar{\Gamma} Q' \right).
\end{align}

One may remark that $A^\top Q$ no longer has orthonormal columns and so does not belong to the Grassmann manifold.
Indeed this is true.
The final transition is due to the observation that the solution to our optimization problem is not unique.
Rather, since we are interested in recovering the stationary subspace and not the exact sources themselves, the solution is invariant to any transformation (e.g., subspace scaling and rotation) acting within each of the $\st$- and $\ns$-spaces separately. 
Furthermore, we have chosen $W \in \mathcal{Q}^\perp$, and so the $\st$-space is orthogonal to the $\ns$-space.
Now, choosing orthogonal bases within each of the subspaces we may restrict ourselves to \emph{orthogonal} mixing matrices $A$ and find a transformation $Q$ which lies in the Grassman manifold.

The final result is quite remarkable.
First, it shows that our problem is essentially agnostic to the mixing matrix.%, behaving as if the signals were never mixed.
Secondly, it eliminates the need to pre-whiten the matrices.
This is useful in situations where it is not appropriate to whiten the data in advance, for example, when working with nearly ill-conditioned matrices.
Furthermore, this eliminates any error that may be caused by inaccurate estimation of the matrix mean.

In conclusion, we have two variations of gaSSA given in the first and last terms of Eq. (\ref{eq:gaSSA_whitening}).
The difference between the two is whether or not the input covariance matrices are whitened.
Our analysis shows that whitening does not improve performance, and may in fact lead to a degradation of the results in certain cases.
So, we claim that it is in general preferable not to whiten the matrices.
In terms of the chosen metric, we do not expect a significant difference when using $\delta_\mathrm{r}$ vs.~$\delta_\mathrm{s}$.
In the following section we will present experimental evidence to support these claims.

\section{Experimental results}

In this section we present experimental results on synthetic and data taken from a real BCI experiment.
We compare the performance of gaSSA to the existing SSA and investigate the effects of matrix whitening and the choice of metric.

\subsection{Toy data}

For our first experiment we generated data following the SSA model as a mixture of stationary and non-stationary sources.
To generate non-stationarity in the data we used a slightly modified version of the scheme provided in the SSA toolbox \cite{muller2011stationary} and detailed in its user manual.
Here we bring only a brief description:

The elements of the mixing matrix $A$ are chosen uniformly from the range $\left[ -0.5,0.5 \right]$ and its columns are normalized to $1$.
The distribution of the $\st$-sources is constant over all epochs, namely $s^\st (t) \sim \mathcal{N}\left(0, \Lambda^\st \right)$.
In the SSA toolbox, $\Lambda^\st$ is taken to be the identity matrix, however, we choose $\Lambda^\st$ to be a random matrix of the form $\Lambda^\st = B \Gamma B^\top$ for an orthogonal matrix $B$ and diagonal matrix $\Gamma$.

The $\ns$-sources are correlated with the $\st$-sources, and for the $i$-th epoch $\tau_i = \left[ t_0(i), \hdots, t_0(i) + T \right]$ they are given by $s^\ns(t) = C_i s^\st (t) + Y^\ns (t)$ for $\quad t \in \tau_i$,
where $C_i \in \mathbb{R}^{(D-m)\times m}$ and $Y^\ns (t) \sim \mathcal{N}\left( \mu_i, \Lambda^\ns_i \right)$.
The covariance matrices $\Lambda^\ns_i$ are generated for each epoch in the same way as $\Lambda^\st$.

So, the covariance matrix of the (unmixed) sources in the $i$-th epoch may be written as
\begin{equation}
\Lambda_i = \text{cov}\left( \left[ \begin{matrix} s^\st (t) \\ s^\ns (t) \end{matrix} \right] \right) = \left[ \begin{matrix} \Lambda^\st & \left( C_i \Lambda^\st \right)^\top \\ C_i \Lambda^\st & C_i\Lambda^\st C_i^\top + \Lambda^\ns_i \end{matrix} \right].
\end{equation}

Using the data generated by the scheme above we compared the performance of our method to that of SSA.
We used the AIRM and log-determinant metric both with and without matrix whitening.

As a performance measure we used the distance between the estimated $\ns$-space $\widehat{A}^\ns$ and the true $\ns$-space $A^\ns$.
This is owing to the fact that, as discussed in \citet{von2009stationary}, the $\ns$-space and $\st$-sources are identifiable, while the $\st$-space and $\ns$-sources are not.
To illustrate this, note that to be stationary, the $\st$-sources must consist strictly of stationary sources, while the $\ns$-sources will remain non-stationary even if they include a mixture of stationary signals.
The distance between sub-spaces is computed using $\delta_{\Gr}$, the metric on the Grassmann manifold \cite{absil2004riemannian}.
Shortly, this metric is based on the principal angles between the two spaces.

We generated $50$ epochs of length $T=250$ for several values of $D$ and $m$.
For each pair $(D,m)$ we conducted the experiment $25$ times.
At each iteration the optimization procedure was restarted $5$ times with different initial guesses and the transformation matrix which obtained the lowest cost was selected.

Due to length restrictions we describe only the results for $D=19$ and $m=12$.
Other choices of parameters exhibited the same behavior.
The $\ns$-space errors were identical for $\delta_\mathrm{r}$ and $\delta_\mathrm{s}$ ($0.0067 \pm 0.0001$).
They were consistently lower than SSA ($0.0115 \pm 0.0004$).
For both metrics the scheme without matrix whitening performed slightly better than those including whitening ($0.0063 \pm 0.0001$ vs.~$0.0067 \pm 0.0001$).
These results are consistent with the analysis of Section \ref{section:invariance}.

\subsection{Brain-computer interface}

Next, we applied our method to data taken from the BCI competition IV dataset II.
This dataset contains motor imagery (MI) EEG signals affected by eye movement artifacts.
It was collected in a multi-class setting, with the subjects performing more than 2 different MI tasks.
However, as in \citet{lotte2011regularizing}, we evaluate our algorithms on two-class problems by selecting only signals of left- and right-hand MI trials.

We applied the same pre-processing as described in~\citet{lotte2011regularizing}.
EEG signals were band-pass filtered in $8-30$ Hz, using a $5^\text{th}$ order Butterworth filter.
For each trial, we extracted features from the time segment located from 0.5s to 2.5s after the cue instructing the subject to perform MI.

The data was initially divided into two parts: a training data set and a test data set.
Similarly to \citet{von2009stationary} the first $20 \% $ of the test trials were set aside for adaptation.
The aim of the adaptation part is to mitigate any non-stationarities between the test and the training session.
We then learned the $\st$-space in an \emph{unsupervised} manner over the training and adaptation part.
As before, our method was reinitialized $5$ times and the transformation attaining the lowest cost was chosen.

The performance was measured by means of the classification rate on the test set.
We used the following naive classifier, referred to as \emph{minimum distance to the mean} (MDM) in \citet{barachant2012multiclass}:
Using the labels of the training set, we compute the mean (in the $\st$-space) for each of the two classes.
Then, we classify the compressed covariance matrices in the test set according to their distance to the class means; each matrix is assigned the class to which it is closer.

The original data is comprised of $22$ signals.
Since the true number of stationary signals is unknown, we repeated the experiment for several values in the range $m \in [10,18]$.
Due to lack of space, we bring the results only for the values $m=10$ and $m=14$, the former being an example of challenging task and the latter an example of an intermediate one.
The results for the nine subjects in the dataset are summarized in Table~\ref{tab:BCI_results10}.
The full results can be found in the supplementary material.

The results show that our method outperforms SSA for most subjects.
As predicted, the methods without pre-whitening generally performed better than those which included whitening of the covariance matrices.
In this more complex setting, more accurate estimation of the mixing matrix does not guarantee better classification.
In terms of the metric, we see that $\delta_\mathrm{r}$ and $\delta_\mathrm{s}$ perform roughly the same.

\begin{table}[t]
\centering
\small
\caption{Classification accuracy for $m=10$ (top) and $m=14$ (bottom) $\st$-sources. Best results are highlighted in boldface. (w) and (nw) signify that matrix whitening was / was not performed.}
\label{tab:BCI_results10}
\begin{tabular}{*{11}{@{\hspace{1.5pt}}c@{\hspace{1.5pt}}}}
\toprule
subject $\#$ & 1 & 2 & 3 & 4 & 5 & 6 & 7 & 8 & 9 & avg \\
\midrule
gaSSA $\delta_\mathrm{r} $ (w) & 48.96 & 55.65 & 63.48 & 58.22 & \textbf{60.69} & 56.26 & 68.09 & 71.30 & 48.70 & 59.04 \\
gaSSA $\delta_\mathrm{r} $ (nw) & 73.91 & 59.13 & \textbf{91.3} & \textbf{69.3} & 57.67 & \textbf{66.43} & 58.70 & \textbf{93.91} & \textbf{86.09} & \textbf{72.94} \\
gaSSA $\delta_\mathrm{s} $ (w) & 49.83 & 55.65 & 63.57 & 61.43 & 58.9 & 55.7 & \textbf{68.17} & 72.17 & 47.34 & 59.2 \\
gaSSA $\delta_\mathrm{s} $ (nw) & \textbf{74.13} & \textbf{60} & \textbf{91.3} & 68.52 & 57.03 & 63.96 & 59.48 & \textbf{93.91} & 85.99  & 72.7 \\
SSA & 47.74 & 57.16 & 60 & 57.61 & 56.02 & 54.17 & 67.3 & 75.89 & 53.82 & 58.86 \\
\midrule
gaSSA $\delta_\mathrm{r} $ (w) & 65.02 & 53.91 & 72.35 & 65.22 & \textbf{62.61} & 59.13 & 72.93 & 77.39 & 77.39 & 67.33 \\
gaSSA $\delta_\mathrm{r} $ (nw) & \textbf{74.4} & \textbf{58.26} & 88.74 & 67.83 & 60 & \textbf{65.22} & 68.91 & \textbf{94.78} & \textbf{87.83} & \textbf{74} \\
gaSSA $\delta_\mathrm{s} $ (w) & 65.02 & 53.91 & 72.35 & 65.22 & \textbf{62.61} & 58.87 & 72.83 & 77.39 & 76.52 & 67.19 \\
gaSSA $\delta_\mathrm{s} $ (nw) & \textbf{74.4} & \textbf{58.26} & \textbf{88.78} & 67.83 & 60 & 63.83 & 68.15 & 93.91 & \textbf{87.83} & 73.67 \\
SSA & 67.05 & 54.72 & 66.7 & \textbf{68.7} & 56.87 & 55.57 & \textbf{73.48} & 75.65 & 77.39 & 66.24 \\
\bottomrule
\end{tabular}
\end{table}

\section{Conclusion}
\label{section:discussion}

We presented a covariance-based method for unsupervised stationary subspace analysis.
The problem was phrased in terms of the distance between matrices and not, as in SSA, using the divergence between probability distributions.
Owing to the symmetries of the problem and the invariance properties of the geometries, we derived useful equivalence relations.
Experiments on both synthetic and BCI data supported our theoretical analysis and showed that our method outperforms SSA.

%In the future we wish to tackle the challenges stemming from to different types of non-stationarity, occurring both within sessions (intra-session) and between them (inter-session), and due to the introduction of class-wise variation which must not be discarded as non-stationarity in classification tasks.
%A promising application is change point detection, where more accurate estimation of the $\ns$-space may lead to better detection of change points.

% \subsubsection*{Acknowledgments}

\small

\bibliography{citations}
\bibliographystyle{abbrvnat}

\end{document}